\tikzset{style green/.style={
    set fill color=green!50!lime!60,
    set border color=white,
  },
  style black/.style={
    set fill color=black!10,
    set border color=white!0,
    rounded corners=2pt,
  },
  style orange/.style={
    set fill color=orange!80!red!60,
    set border color=white,
  },
  hor/.style={
    above left offset={-0.15,0.31},
    below right offset={0.15,-0.125},
    #1
  },
  ver/.style={
    above left offset={-0.1,0.3},
    below right offset={0.15,-0.15},
    #1
  }
}
\renewcommand*{\@opargbegintheorem}[3]{\trivlist
  \item[\hskip \labelsep{\bfseries #1\ #2}] \textbf{(#3)}\ \itshape}
\def\*#1{\mathbf{#1}}
\def\Pa#1{\text{Pa}#1}
\newcommand{\ind}{\mathbbm{1}}
\newcommand{\Var}{\mathrm{Var}}
\newcommand\norm[1]{\lVert#1\rVert_{\Var}}
\def\mathcolor#1#{\@mathcolor{#1}}
\def\@mathcolor#1#2#3{%
  \protect\leavevmode
  \begingroup
    \color#1{#2}#3%
  \endgroup
}
\newtheorem{assumption}[theorem]{Assumption}
\title[Identifying Linearly-Mixed Causal Representations from Multi-Node Interventions]{Identifying Linearly-Mixed Causal Representations\\ from Multi-Node Interventions}
\thanks{These authors contributed equally.} \Email{urmi.ninad@tu-berlin.de}\\
\begin{document}

\maketitle

\begin{abstract}%
    The task of inferring high-level causal variables from low-level observations, commonly referred to as \emph{causal representation learning}, is fundamentally underconstrained. As such, recent works to address this problem focus on various assumptions that lead to identifiability of the underlying latent causal variables. A large corpus of these preceding approaches consider multi-environment data collected under different interventions on the causal model. What is common to almost all of these works is the restrictive assumption that in each environment, only a single variable is intervened on. In this work, we relax this assumption and provide a novel identifiability result for causal representation learning that allows for multiple variables to be targeted by an intervention within one environment. Our approach hinges on a general assumption on the coverage and diversity of interventions across environments, which also includes the shared assumption of single-node interventions of previous works. The main idea behind our approach is to exploit the trace that interventions leave on the variance of the ground truth causal variables and regularizing for a specific notion of sparsity with respect to this trace. In addition to and inspired by our theoretical contributions, we present a practical algorithm to learn causal representations from multi-node interventional data and provide empirical evidence that validates our identifiability results.
\end{abstract}

\begin{keywords}%
  Causal representation learning, identifiability, interventional data, sparsity
\end{keywords}

\section{Introduction}

Across the sciences, data are recorded in the form of low-level measurements of observable physical variables. However, the causal model underlying and robustly explaining the data may operate on high-level variables, sometimes called \emph{causally autonomous} or \emph{causally disentangled} variables. The task of learning such causal representations of data falls under the rubric of \emph{causal representation learning} \citep{scholkopf_toward_2021}. This problem has been shown to be fundamentally underconstrained \citep{locatello_challenging_2019}, leading to various approaches that employ inductive biases in order to provably identify the underlying latent causal variables.

Recent works that provide such identifiability guarantees either restrict the underlying structural causal model \citep{lachapelle_disentanglement_2022,buchholz_learning_2023,liang_causal_2023}, or the transformation mapping the causal variables to the observed data (also called the mixing function) 
\citep{ahuja_interventional_2023,zhang_identifiability_2023}, or both \citep{squires_linear_2023}. Many of these methods additionally assume the availability of interventional or counterfactual data collected across environments \citep{ahuja_interventional_2023,ahuja_multi-domain_2023,zhang_identifiability_2023,squires_linear_2023,buchholz_learning_2023,liang_causal_2023,von_kugelgen_nonparametric_2023,von_kugelgen_self-supervised_2021,brehmer_weakly_2022}, or use supervisory signals such as time structure \citep{hyvarinen_nonlinear_2017,halva_hidden_2020,yao_learning_2021}, sometimes in addition with knowledge of intervention targets \citep{lippe_causal_2022,lippe_citris_2022}.
What unites nearly all aforementioned methods that require the availability of interventional data---and guarantee full component-wise identifiability of the latent causal variables---is the restrictive assumption requiring interventions to be \emph{atomic} or \emph{single-node}, i.e., when the number of latent variables intervened on per environment is at most one.

\begin{figure}[t]
        \begin{tabular}{@{} m{.18\textwidth} @{} m{.273\textwidth} @{} m{.273\textwidth} @{} m{.273\textwidth} @{}}
            
            &
            \centering
            \resizebox{!}{3.2cm}{
            \tikz[latent/.append style={minimum size=0.85cm},obs/.append style={minimum size=0.85cm},det/.append style={minimum size=1.15cm}, wrap/.append style={inner sep=2pt}, on grid]{
                \node[latent](z_1){\(Z_1\)};
                \node[above=1.0cm of z_1, xshift=-1.5cm]{\(e = 1\)};
                \node[latent, left=1.2cm of z_1, yshift=-2.08cm](z_2){\(Z_2\)};
                \node[latent, right=1.2cm of z_1, yshift=-2.08cm](z_3){\(Z_3\)};
                \node[above=0.3cm of z_1, xshift=0.6cm](bolt){\includegraphics[width=0.55cm]{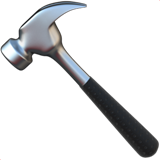}};
                \node[above=0.3cm of z_2, xshift=0.6cm](bolt){\includegraphics[width=0.55cm]{figures/hammer.png}};
                \edge[-{Stealth[length=2mm, width=1.5mm]}]{z_1}{z_3};
                \edge[-{Stealth[length=2mm, width=1.5mm]}]{z_2}{z_3};
            }
            }
            & 
            \centering
            \resizebox{!}{3.2cm}{
            \tikz[latent/.append style={minimum size=0.85cm},obs/.append style={minimum size=0.85cm},det/.append style={minimum size=1.15cm}, wrap/.append style={inner sep=2pt}, on grid]{
                \node[latent](z_1){\(Z_1\)};
                \node[above=1.0cm of z_1, xshift=-1.5cm]{\(e = 2\)};
                \node[latent, left=1.2cm of z_1, yshift=-2.08cm](z_2){\(Z_2\)};
                \node[latent, right=1.2cm of z_1, yshift=-2.08cm](z_3){\(Z_3\)};
                \node[above=0.3cm of z_1, xshift=0.6cm](bolt){\includegraphics[width=0.55cm]{figures/hammer.png}};
                \node[above=0.3cm of z_3, xshift=0.6cm](bolt){\includegraphics[width=0.55cm]{figures/hammer.png}};
                \edge[-{Stealth[length=2mm, width=1.5mm]}]{z_1}{z_2};
            }
            }
            &
            \centering
            \resizebox{!}{3.2cm}{
            \tikz[latent/.append style={minimum size=0.85cm},obs/.append style={minimum size=0.85cm},det/.append style={minimum size=1.15cm}, wrap/.append style={inner sep=2pt}, on grid]{
                \node[latent](z_1){\(Z_1\)};
                \node[above=1.0cm of z_1, xshift=-1.5cm]{\(e = 3\)};
                \node[latent, left=1.2cm of z_1, yshift=-2.08cm](z_2){\(Z_2\)};
                \node[latent, right=1.2cm of z_1, yshift=-2.08cm](z_3){\(Z_3\)};
                \node[above=0.3cm of z_2, xshift=0.6cm](bolt){\includegraphics[width=0.55cm]{figures/hammer.png}};
                \node[above=0.3cm of z_3, xshift=0.6cm](bolt){\includegraphics[width=0.55cm]{figures/hammer.png}};
            }
            }
        \end{tabular}
        \vspace{-0.5cm}
        \\
        \vspace{-0.7cm}
        \begin{tabular}{@{} m{.2\textwidth} @{} m{.266\textwidth} @{} m{.266\textwidth} @{} m{.266\textwidth} @{}}
            \((a)\) \begin{tabular}{c}
            Ground truth\\
            \(\*z\)
            \end{tabular}
            &
            \[\begin{bNiceMatrix}[first-row]
            \CodeBefore
            \Body
            z_1 & z_2 & z_3 \\
            1.0 & 1.0 & 1.01 \\
            1.0 & 1.0 & 0.98 \\
            \vdots & \vdots & \vdots \\
            1.0 & 1.0 & 1.12
            \end{bNiceMatrix}
             \]
            & 
            \[
            \begin{bNiceMatrix}[first-row]
            \CodeBefore
            \Body
            z_1 & z_2 & z_3 \\
            1.0 & 0.99 & 2.0 \\
            1.0 & 1.03 & 2.0 \\
            \vdots & \vdots & \vdots \\
            1.0 & 0.88 & 2.0
            \end{bNiceMatrix}
            \]
            &
            \[
            \begin{bNiceMatrix}[first-row]
            \CodeBefore
            \Body
            z_1 & z_2 & z_3 \\
            0.10 & 1.0 & 3.0 \\
            -0.11 & 1.0 & 3.0 \\
            \vdots & \vdots & \vdots \\
            0.32 & 1.0 & 3.0
            \end{bNiceMatrix}
            \]
        \end{tabular}
        \\
        \begin{tabular}{@{} m{.065\textwidth}@{} m{.135\textwidth} @{} m{.266\textwidth} @{} m{.266\textwidth} @{} m{.266\textwidth} @{}}
            \((b)\) 
            &
            \begin{tabular}{c}
            Mixture\\
            \(\tilde{\*z} := \*z \*L\)
            \end{tabular}
            &
            \[
            \begin{bNiceMatrix}[first-row]
            \CodeBefore
            \Body
            \tilde{z}_1 & \tilde{z}_2 & \tilde{z}_3 \\
            4.01 & 2.32 & 0.01 \\
            4.22 & 2.01 & -0.27 \\
            \vdots & \vdots & \vdots \\
            3.79 & 1.08 & 0.42
            \end{bNiceMatrix}
            \]
            & 
            \[
            \begin{bNiceMatrix}[first-row]
            \CodeBefore
            \Body
            \tilde{z}_1 & \tilde{z}_2 & \tilde{z}_3 \\
            3.09 & 2.09 & -0.02 \\
            3.72 & 2.23 & 0.26 \\
            \vdots & \vdots & \vdots \\
            4.43 & 1.95 & -0.12
            \end{bNiceMatrix}
            \]
            &
            \[
            \begin{bNiceMatrix}[first-row]
            \CodeBefore
            \Body
            \tilde{z}_1 & \tilde{z}_2 & \tilde{z}_3 \\
            3.79 & 2.63 & -2.13 \\
            3.51 & 2.11 & -2.32 \\
            \vdots & \vdots & \vdots \\
            4.07 & 1.72 & -1.96
            \end{bNiceMatrix}
            \]
        \end{tabular}
        \caption{Comparison of samples drawn from the SCM of \cref{example} under three different interventions between \textbf{(\(\bm{a}\))} the ground truth representation and \textbf{(\(\bm{b}\))} a mixed representation. Notice that the density of variables with nonzero variance is lower in each environment in the ground truth representation than in the mixed case. We exploit the principle that the ground truth is more sparse in terms of nonzero variance dimensions to achieve identifiability of causal representations using multi-node interventional data.}
        \label{fig:example}
    \end{figure}

In this work, we relax the restrictive single-node intervention assumption, and design a setup that guarantees the element-wise identifiability of latent causal variables by using data that is collected across multiple interventional environments, given the assumption that the mixing function is linear. However, importantly, we do not require these interventions to be atomic, only that they are \emph{hard} and sufficiently \emph{diverse} in a specific sense outlined below, see in particular \cref{ass:support}. Our assumption on the nature of interventions generalizes previous works in the sense that they allow for more general interventions, most notably those acting on multiple variables in one environment, while still allowing for the case of atomic interventions. Furthermore, we make no assumptions on the underlying structural causal model (SCM). 
To prove our main result, we introduce a specific notion of sparsity concerning the trace left in the data by interventions. As we detail in \cref{app:proof}, this general idea of using sparsity to achieve identifiability of representations is motivated by a similar assumption on the predictor of a  multi-task prediction problem of \citet{lachapelle_disentanglement_2022}.
However, our problem setting is not a prediction task, and therefore the nature of our sparsity assumption is fundamentally different. 
Our results also support the \emph{sparse mechanism shift} hypothesis, that posits that the distribution shifts of the underlying causal variables across environments must be sparse \citep{perry_causal_2022}.

Our \textbf{main contributions} can be summarized as follows:
\begin{enumerate}
    \item We formalize the notion that in the ground truth representation only a \emph{sparse} subset of variables is affected by an intervention, while this effect is dense for variables that have undergone mixing.
    \item Based on this, we prove identifiability for the setup of latent variables that have undergone linear mixing, given hard interventional data across environments. The SCM underlying the latent variables can be arbitrarily nonlinear with additive noise.
    Crucially, the interventional structure is not required to be atomic, but only sparse and diverse in the sense outlined in \cref{ass:support}.
    \item We present a proof-of-concept algorithm to recover the latent causal variables up to permutations and rescaling in the setting outlined above, alongside accompanying experiments.
\end{enumerate}

\section{Motivating Example}

We begin by presenting the following motivating example as an intuitive introduction to our approach to exploit a specific notion of sparsity to disentangle linear mixtures using multi-node interventional data.

\begin{example}\label{example}
    Consider the following structural causal model (SCM)
    \begin{equation*}
        \begin{tabular}{@{} m{.33\textwidth} @{} m{.33\textwidth} @{} m{.33\textwidth} @{}}
            \centering
            \resizebox{.2\textwidth}{!}{
            \tikz[latent/.append style={minimum size=0.85cm},obs/.append style={minimum size=0.85cm},det/.append style={minimum size=1.15cm}, wrap/.append style={inner sep=2pt}, on grid]{
                \node[latent](z_1){\(Z_1\)};
                \node[latent, left=1.2cm of z_1, yshift=-2.08cm](z_2){\(Z_2\)};
                \node[latent, right=1.2cm of z_1, yshift=-2.08cm](z_3){\(Z_3\)};
                \edge[-{Stealth[length=2mm, width=1.5mm]}]{z_1}{z_2,z_3};
                \edge[-{Stealth[length=2mm, width=1.5mm]}]{z_2}{z_3};
            }
            }
            & 
            \[
            \begin{aligned}
                 Z_1 &:= \eta_1 \\
                 Z_2 &:= Z_1 + \eta_2 \\
                 Z_3 &:= Z_1 + Z_2 + \eta_3
            \end{aligned}
            \]
            &
            \[\eta_j \sim \mathcal{N}(0,1),\]
        \end{tabular}
    \end{equation*}
    with \(d = 3\) variables \(\*Z = (Z_1, Z_2, Z_3)\) and independent noise terms $\eta_1,\eta_2,\eta_3$. Now, consider three environments, with the following corresponding interventions
    \begin{align*}
        I^1 &= \{\textnormal{do}(Z_1 = 1, Z_2 = 1)\},\\
        I^2 &= \{\textnormal{do}(Z_1 = 1, Z_3 = 2)\},\\
        I^3 &= \{\textnormal{do}(Z_2 = 1, Z_3 = 3)\}.
    \end{align*}
    We draw \(n\) samples from the above SCM in each environment. The resulting \(\*z^j \in \mathbb{R}^{n \times d}\) are shown in Fig.~\hyperref[fig:example]{1(a)}.
    
    Now, consider the same environments as above, but in the case where the variables in \(\*Z\) are mixed by the invertible matrix
    \begin{align*}
        \*L = \begin{bmatrix}
            1 & 1 & 1\\
            1 & -1 & 1\\
            1 & 1 & -1\\
        \end{bmatrix},
    \end{align*}
    denoting the mixed variables as \(\tilde{\*Z} = \*Z \*L\). Again drawing \(n\) samples for each environment---we recall that interventions act on the unmixed ground truth variables---the resulting \(\tilde{\*z}^j \in \mathbb{R}^{n \times d}\) is presented in Fig.~\hyperref[fig:example]{1(b)}.
   
    Notice that in the unmixed case, in each respective environment, the columns of \(\*z^j\) which have been intervened on take the same constant value for each sample and thus have zero variance, while those dimensions that are not targeted by interventions have nonzero variance and display stochasticity in the values they may take. Contrarily, in the mixed case, for this specific mixing matrix \(\*L\), in all environments, all columns \(\tilde{\*z}^j\) have nonzero variance.
\end{example}

The main observation to be made in the above example is the fact that mixing seems to increase the "density" of those dimensions of \(\tilde{\*Z}\) which are not constant and have nonzero variance, w.r.t the ground truth representation \(\*Z\).

Inspired by this observation, we propose to regularize the learned representation such that the effect of interventions is most sparse across environments. In the following, we make this specific notion of sparsity precise and show that this constraint indeed allows us to recover the latent causal factors up to permutation and rescaling (cf. \cref{def:disent}), under a general linear mixing matrix \(\*L\).

\section{Disentanglement from Multi-Node Interventions}\label{sec:disent}
In this section, we present the main results of this work on identifying causal representations from multi-node interventional data. The basic idea of our approach is to exploit the fact that interventions leave a specific trace in the data, which is most sparse in the ground truth representation. In the following, we make this notion of sparsity formal and prove that regularizing for it yields a representation of the latent variables that is equivalent to the ground truth.

\subsection{Problem Setting}\label{ssec:problem}
\paragraph{Notation.}
Scalar variables are denoted in normal face (\(x\)) and vector-valued variables in bold (\(\*x\)). Random variables are capitalized (\(Y\)), the values they take we write in lower case (\(y\)). Matrices are capitalized and bold (\(\*M\)) and will be introduced as such. We denote the sequence of integers from \(1\) to \(n\) with \([n]\).

\paragraph{Data Generating Process.}

Consider the random variables \(\*Z = (Z_1, \dots, Z_d)\). Assume \(\*Z \sim P\) where the unknown joint distribution \(P\) is induced by a structural causal model (SCM) defined over the random vector \(\*Z\). This SCM induces the factorization
\begin{align}
    P(\*Z) = \prod^{d}_{j=1} P(Z_j \mid Z_{\Pa_j}),
\end{align}
where \(\Pa_j \subset [d] \setminus \{j\}\) indicates the parents of variable \(Z_j\). Let \(\mathcal{G}\) denote the corresponding directed acyclic graph (DAG) of this SCM. For a detailed definition of SCMs, we refer the reader to \citet{pearl_causality_2009}. 

We make no parametric assumptions on the causal mechanisms of this SCM, i.e., the structural equation for each variable takes on the general form \(Z_j := f_j(Z_{\text{Pa}_j}, \eta_j)\), where \(\eta_j; j \in [d]\) are the exogenous, independent noise terms. We assume the distributions of the noises have nonzero variance.

Instead of measuring \(\*Z\) directly, we only have access to observations \(\tilde{\*Z} \in \mathbb{R}^m\), where \(\tilde{\*Z} = \*Z \*L\)
is generated from the causal variables by the injective linear map \(\*L: \mathbb{R}^d \rightarrow \mathbb{R}^m\) with \(m \geq d\).

Additionally, we assume to have access to observations \(\tilde{\*Z}\) from different environments, where each environment \(e \in \mathcal{E}\) corresponds to a setting where subsets \(T \subseteq [d]\) of the underlying, latent variables \(\*Z\) have been intervened upon. We assume that we are given a probability measure \(P_E\) on the set of environments \(\mathcal{E}\) with full support \(\mathcal{E}\), which describes the distribution of a random environment $E$. Note that in practice we will have distributions of the latent variables corresponding to different interventional environments, that we can then view as having been drawn from $P_E$, much in the way outlined in \citet{mooij_joint_2020}. We explicitly do not include the observational distribution, i.e., where no variables have been intervened on, in \(\mathcal{E}\). We consider do-interventions \citep{pearl_causality_2009}, sometimes called hard interventions, which replace the structural equations of intervened upon variables with constant values. We write
\begin{align}
        Z_j := a_j \quad \text{for } j \in T,
\end{align}
where \(\*a \in \mathbb{R}^{\left|T\right|}\). Each environment \(e\) is characterized by its corresponding intervention \(I^e := \{(T, \*a) \mid T \subseteq [d], \*a \in \mathbb{R}^{\left|T\right|}\}\) and we denote with \(P^e\) the distribution of the random vector \(\*Z^e\) induced by intervention \(I^e\). We stress that interventions are explicitly not assumed to only target single variables. We assume that the mixing function \(\*L\) remains unchanged across environments.

\paragraph{Objective.}
Our goal is to recover the latent variables \(\*Z\) from observations \(\tilde{\*Z}\). Specifically, we are interested in exploring how observing transformations of latent variables under different interventional environments \(e \in \mathcal{E}\) can be leveraged to learn the underlying causal variables.

Formally, recovering the latent variables \(\*Z\) from observations \(\tilde{\*Z}\) amounts to learning the inverse of the mixing function \(\*L\). 
We are only interested in latent variables that are equal to the ground truth up to permutation, element-wise rescaling and possible redundancies if the dimensionality of the learned representation is greater than that of the ground truth variables. Consequently, we define an equivalence class over such latents. 
While weaker notions exist \citep{hyvarinen_nonlinear_2017}, we refer to identifiability as being able to recover the latent variables up to this equivalence.
We call equivalent representations \emph{causally disentangled up to redundancies}.

\begin{definition}[Causal Disentanglement up to Redundancies]
\label[definition]{def:disent}
    A learned representation \(\hat{\*Z} \in \mathbb R^m\) is causally disentangled up to redundancies w.r.t. the ground truth representation \(\*Z \in \mathbb R^d\) if there exists a matrix \(\*L \in \mathbb R^{d \times m}\), where
    \begin{enumerate}
        \item each column of \(\*L\) contains at most one nonzero element,
        \item there are at least \(d\) columns in \(\*L\) with a nonzero element,
    \end{enumerate}
    s.t. \(\hat{\*Z} = \*Z \*L\) almost surely.
\end{definition}
This definition ensures that each learned variable \(\hat{Z}_i\) is either a scalar multiple of a ground truth variable \(Z_j\), or zero, and that all ground truth variables in \(\*Z\) appear as scalar multiples in \(\hat{\*Z}\) at least once. 
Notice for the special case where \(m=d\), the matrix \(\*L\) in the above definition can be decomposed into a diagonal invertible matrix \(\*D\) and a permutation matrix \(\*P\), i.e., \(\*L = \*D \*P\), and we recover the standard definition of causal disentanglement \citep{khemakhem_variational_2020,lachapelle_disentanglement_2022}.

\subsection{Assumptions}\label{ssec:ass}
Before presenting our main theorem, we introduce some additional objects and assumptions that are required to make our considerations precise.

First, let us formally define a quantity that allows us to measure the notion of sparsity we are interested in.
\begin{definition}[Variance density.]
    Let \(\*A \in \mathbb{R}^d\) be a random vector. Then
    \begin{align*}
        \norm{\*A} := \sum^{d}_{j = 1} \ind(\Var(A_j) \neq 0),
    \end{align*}
    where \(\ind(\cdot)\) is the indicator function.
\end{definition}

The next lemma states that there is no mixing matrix \(\*L\), such that in the resulting mixture, any variable has variance zero.

\begin{lemma}[Non-vanishing variance under mixing.]\label[lemma]{lem:var}
\label[assumption]{ass:var}
    For all invertible \(\*L \in \mathbb{R}^{d \times d}\),
    \begin{align*}
        \forall j \in [d], \Var((\*Z \*L)_j) \neq 0.
    \end{align*}
\end{lemma}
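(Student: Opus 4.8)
The plan is to translate the statement into a positive-definiteness claim about the covariance matrix of $\*Z$ and then exploit the recursive structure of the SCM. First I would fix an invertible $\*L$ and a coordinate $j$, and write $(\*Z\*L)_j = \sum_{i=1}^d L_{ij} Z_i = \*Z \*c_j$, where $\*c_j := \*L_{\cdot j}$ denotes the $j$-th column of $\*L$. Setting $\Sigma := \Cov(\*Z)$, we have $\Var((\*Z\*L)_j) = \*c_j^\top \Sigma \*c_j$, so this quantity vanishes precisely when $\*c_j$ lies in the kernel of $\Sigma$. Since $\*L$ is invertible, none of its columns is the zero vector, so it suffices to show that $\Sigma$ is strictly positive definite; equivalently, that no nonzero linear combination $\sum_i v_i Z_i$ is almost surely constant.

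To establish this, I would use a topological ordering of the DAG $\mathcal{G}$ together with the additive-noise form $Z_j = f_j(Z_{\text{Pa}_j}) + \eta_j$. Reorder the coordinates so that $\text{Pa}_j \subseteq \{1, \dots, j-1\}$ for every $j$, and fix an arbitrary nonzero $\*v \in \mathbb{R}^d$. Let $k := \max\{i : v_i \neq 0\}$ be the largest active index. Since each $Z_i$ with $i \le k$ is a measurable function of $\eta_1, \dots, \eta_k$, and $f_k(Z_{\text{Pa}_k})$ depends only on $\eta_1, \dots, \eta_{k-1}$, I can split
\[
\sum_{i=1}^d v_i Z_i \;=\; \underbrace{\Big(\sum_{i<k} v_i Z_i + v_k\, f_k(Z_{\text{Pa}_k})\Big)}_{=:R} \;+\; v_k\, \eta_k,
\]
where $R$ is a function of $\eta_1, \dots, \eta_{k-1}$ alone. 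Because the exogenous noises are independent, $\eta_k \perp R$, hence $\Var\big(\sum_i v_i Z_i\big) = \Var(R) + v_k^2\, \Var(\eta_k) \ge v_k^2\, \Var(\eta_k) > 0$, using $v_k \neq 0$ and the assumption that every noise has nonzero variance. This shows $\Sigma \succ 0$, and therefore $\*c_j^\top \Sigma \*c_j > 0$ for the nonzero column $\*c_j$, completing the argument.

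I expect the main obstacle to be conceptual rather than computational: making rigorous the independence of the ``last'' active noise $\eta_k$ from the remainder $R$, which is what the whole argument turns on. This relies on two structural features that must be stated carefully. The first is the additive-noise form: a purely entangled mechanism $f_j(Z_{\text{Pa}_j}, \eta_j)$ could absorb or cancel $\eta_j$ and render some coordinate, or some linear combination, degenerate, so the claim genuinely requires additivity. The second is that $\*Z$ here denotes the observational vector in which every structural noise is active; in a hard-intervened environment a targeted coordinate becomes constant and the conclusion fails already for $\*v = \*e_j$, so the lemma is intrinsically a statement about the unintervened distribution. A minor technical caveat is that I assume finite second moments, so that $\Sigma$ and the variance decomposition are well defined.
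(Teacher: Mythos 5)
Your proof is correct and reaches the right conclusion, but the finishing move differs from the paper's in a way worth noting. Both arguments rest on the same core device: topologically order the variables, express each $Z_i$ as a function of the noises $\eta_1,\dots,\eta_i$, and isolate the largest index carrying a nonzero coefficient so that its noise term is independent of everything else in the linear combination. From there the paper argues by contradiction for the fully general mechanism $Z_j = f_j(Z_{\text{Pa}_j},\eta_j)$: if the variance vanished, the top-ordered active variable $Z_{i_s}$ would be a function of the earlier noises alone, forcing $Z_{i_s}\perp\eta_{i_s}$ and contradicting a non-degeneracy assumption (no variable is independent of its own noise). You instead invoke additivity, $Z_k = f_k(Z_{\text{Pa}_k}) + \eta_k$, to peel off $v_k\eta_k$ as an independent summand and obtain the explicit bound $\Var\bigl(\sum_i v_i Z_i\bigr) \ge v_k^2\,\Var(\eta_k) > 0$; packaging the claim as strict positive definiteness of $\Cov(\*Z)$ is a clean reformulation. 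Your route is more elementary and quantitative, but strictly narrower in scope: it genuinely needs the additive-noise form (as you yourself observe, an entangled $f_j(Z_{\text{Pa}_j},\eta_j)$ could absorb $\eta_j$), whereas the paper's contradiction argument covers any SCM satisfying the weaker condition $Z_i\not\perp\eta_i$. Your caveats about finite second moments and about the statement pertaining to a distribution in which every structural noise is active are both apt; note that the main theorem actually applies the lemma to the sub-vector $\*Z_S^E$ of non-intervened coordinates, which is precisely the regime where your decomposition (and the paper's) still goes through.
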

The proof is presented in \cref{app:ssec:lemmas}.

Further, we denote with \(S^e\) the set that contains the indices the elements of  \(\*Z^e\) that have nonzero variance in environment \(e\), that is
\begin{align*}
    S^e := \{j \in [d] \mid \Var(Z^e_j) \neq 0\}.
\end{align*}
Because we consider hard interventions that set the values of variables to constant values, \(S\) contains all variables that have \emph{not} been intervened upon in this environment. 
 
We consider again the environment distribution $P_E$ which we can factorize as
\begin{align*}
    P_{E} = \sum_{S \in \mathcal{P}([d])} p(S) P_{E \mid S},
\end{align*}
where \(\mathcal{P}([d])\) denotes all subsets of \([d]\), \(p(S) = P_E \left( \{e \in \mathcal{E} \ | \ S^e = S \} \right) \) and \(P_{E \mid S}\) is  the conditional distribution $P_{E \mid S} ( \ \cdot \ ) := P_E(\ \cdot \ | S) = \tfrac{P_E(\ \cdot \ \cap S)}{p(S)}$ if $p(S) \neq 0$ (and $0$ otherwise). We denote with \(\mathcal{S}\) the support of the distribution \(p(S)\), i.e., \(\mathcal{S}:=\{S \in \mathcal{P}([d]) \mid p(S) > 0\}\).
The next assumption concerns \(\mathcal{S}\) and in words states that the environments we observe must be diverse enough, such that for each latent dimension \(j\), if we consider all environments where we intervene on variable \(Z_j\), we do not always simultaneously intervene on another variable \(Z_i\).
\begin{assumption}[Sufficient coverage of interventions.]
\label[assumption]{ass:support}
    For all \(j \in [d]\)
    \begin{align*}
        \bigcup_{S \in \mathcal{S} \mid j \notin S} S = [d]\setminus\{j\}.
    \end{align*}
\end{assumption}
This assumption is arguably quite general, therefore some remarks on its implications are in order. First, notice that if a given variable \(Z_j\) is intervened on in \emph{all} environments \(\mathcal{E}\), the assumption cannot be fulfilled. Further, the set of environments that contain single-node interventions for all variables are subsumed and permitted by the above assumption, showing that it generalizes the most common assumption of preceding works.

A particularly interesting implication of this assumption lies in its connection to the notion of \emph{separating systems}, first introduced by \citet{katona_separating_1966}. This concept has proven useful for experimental design for causal discovery \citep{hyttinen_experiment_2013, shanmugam_learning_2015, kocaoglu_cost-optimal_2017, kocaoglu_experimental_2017} and also allows us to derive a lower bound on the number of environments required to fulfill the above assumption and therefore ultimately learn the underlying causal representation. Specifically, \cref{ass:support} is equivalent to the definition of \emph{strongly} separating systems in \citet[Definition~1]{kocaoglu_experimental_2017}, which by \citet[Lemma~2]{kocaoglu_experimental_2017} allows us to directly conclude that we can satisfy our central assumption with a minimum of \(2\lceil\log d \rceil\) environments. This stands in contrast to requiring \(d\) environments when only single-node interventions are considered and highlights that allowing multi-node interventions can lead to requiring less environments overall, when the interventions can be chosen freely. Note that \citet{lippe_intervention_2022} show a similar result for a lower bound on required environments for causal representation learning, however these are specific to their time series setting that additionally requires knowledge of intervention targets in each environment.

\subsection{Identifiability Result}
We are now ready to present our main identifiability result. Intuitively, it exploits the fact that in the ground truth representation, interventions set certain dimensions to a constant value, leaving only a sparse subset of all dimensions with nonzero variance. Conversely, under some linear mixing, generally \emph{all} latent dimensions will have nonzero variance terms. The idea is then to find a transformation that yields a representation that is as sparse as possible in the aforementioned sense, which we show results in recovering latent variables that are equivalent to the ground truth.

\begin{theorem}[Disentanglement via intervention sparsity.]
\label{th:disent}
    Assume the data generating process described in \cref{ssec:problem}. Let \(\*L \in \mathbb{R}^{d \times m}\) be an injective matrix and \(\hat{\*Z} = \*Z \*L\).
    Suppose \cref{ass:support} holds. Then, if 
    \begin{align*}
        \mathbb{E}_{P_E} \norm{\hat{\*Z}^E} \leq \mathbb{E}_{P_E} \norm{\*Z^E},
    \end{align*}
    \(\hat{\*Z}\) is causally disentangled up to redundancies w.r.t. \(\*Z\) (cf. \cref{def:disent}), where $\norm{\*Z^{\cdot} }$ is defined as the function $\norm{\*Z^{\cdot}}: (\mathcal{E},P_E) \to \mathbb{R}, \ e \mapsto \norm{\*Z^e}$.

    In particular, if \(m=d\), it holds that \(\*L = \*D \*P\), where \(\*D\) is a diagonal invertible matrix and \(\*P\) is a permutation matrix.
\end{theorem}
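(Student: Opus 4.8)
The plan is to reduce the statement to a purely combinatorial fact about the support pattern of the columns of \(\*L\), and then to extract the disentangled structure from an equality case. Writing \(\sigma_i := \{j \in [d] : L_{ji} \neq 0\}\) for the support of the \(i\)-th column of \(\*L\), the first step is a clean reformulation of the two variance densities. For a coordinate of \(\hat{\*Z}^e = \*Z^e\*L\) we have \((\hat{\*Z}^e)_i = \sum_{j \in S^e} L_{ji}Z_j^e + \text{const}\), so its variance vanishes exactly when the linear combination \(\sum_{j \in S^e} L_{ji}Z_j^e\) is almost surely constant. The argument underlying \cref{lem:var} applied inside environment \(e\) shows that the covariance matrix of the non-intervened subvector \((Z_j^e)_{j \in S^e}\) is positive definite; hence this combination is constant iff \(L_{ji}=0\) for every \(j\in S^e\). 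This yields \(\norm{\hat{\*Z}^e} = \#\{i \in [m] : \sigma_i \cap S^e \neq \emptyset\}\) and \(\norm{\*Z^e} = |S^e|\), so after taking expectations \(\mathbb{E}_{P_E}\norm{\hat{\*Z}^E} = \sum_{i=1}^m P_E(\sigma_i \cap S^E \neq \emptyset)\) and \(\mathbb{E}_{P_E}\norm{\*Z^E} = \sum_{j=1}^d P_E(j \in S^E)\).

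The second step is to prove the reverse inequality always holds, so that the hypothesis forces equality. Injectivity of \(\*L\) means it has full row rank \(d\), so for any set of rows \(J\) the submatrix \(\*L_{J,\cdot}\) has rank \(|J|\) and therefore at least \(|J|\) nonzero columns; that is, \(|\{i : \sigma_i \cap J \neq \emptyset\}| \geq |J|\). This is exactly Hall's condition for the bipartite containment relation \(j \in \sigma_i\), so there is an injective system of distinct representatives \(j \mapsto i(j)\) with \(j \in \sigma_{i(j)}\). Since \(\{j \in S^E\} \subseteq \{\sigma_{i(j)} \cap S^E \neq \emptyset\}\) and the \(i(j)\) are distinct, dropping the remaining (nonnegative) column terms gives \(\mathbb{E}_{P_E}\norm{\hat{\*Z}^E} \geq \sum_j P_E(j \in S^E) = \mathbb{E}_{P_E}\norm{\*Z^E}\). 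Combined with the assumed \(\leq\), every inequality is tight, which I read off as: (i) for each \(j\), \(P_E(\sigma_{i(j)} \cap S^E \neq \emptyset,\ j \notin S^E) = 0\), and (ii) for each column \(i_0\) outside the range of the representative map, \(P_E(\sigma_{i_0} \cap S^E \neq \emptyset) = 0\).

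The third step converts these equalities into the column structure of \cref{def:disent} using \cref{ass:support}. For (i), if some \(\sigma_{i(j)}\) contained an index \(k \neq j\), the coverage assumption guarantees an \(S \in \mathcal{S}\) with \(j \notin S\) and \(k \in S\), which has positive \(P_E\)-mass and forces \(\sigma_{i(j)} \cap S^E \neq \emptyset\) with \(j \notin S^E\), contradicting (i); hence \(\sigma_{i(j)} = \{j\}\). For (ii), the assumption (for \(d \geq 2\)) implies \(\bigcup_{S \in \mathcal{S}} S = [d]\), so \(P_E(\sigma_{i_0} \cap S^E \neq \emptyset) = 0\) forces \(\sigma_{i_0} = \emptyset\). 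Thus every column of \(\*L\) is either a singleton at its representative or identically zero, which is condition~1; the \(d\) distinct representative columns are nonzero and, being supported on the distinct rows \(\{j\}\), cover all latent dimensions, giving condition~2. For \(m=d\) the injective \(\*L\) is invertible, and having exactly one nonzero per column forces a generalized permutation matrix \(\*L = \*D\*P\).

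I expect the main obstacle to be the first step's positive-definiteness claim: one must rule out that a nontrivial linear combination of the non-intervened variables is almost surely constant, i.e., that \(\Cov((Z_j^e)_{j \in S^e})\) is nonsingular in each environment, so that ``nonzero variance of a mixed coordinate'' coincides exactly with ``column support meets \(S^e\)''. The combinatorial heart is the Hall/SDR argument of the second step, which is precisely where injectivity of \(\*L\) is converted into the matching lower bound; once equality is available, \cref{ass:support} is tailored to pin down the supports.
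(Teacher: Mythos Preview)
Your proof is correct and shares the paper's overall strategy---reformulate both expected variance densities combinatorially as $\sum_i \ind(S\cap N_i\neq\varnothing)$ versus $\sum_j \ind(j\in S)$, produce a matching of latent indices to columns that makes the right-hand sum termwise dominated, deduce termwise equality, and invoke \cref{ass:support} (via De~Morgan) to force singleton column supports---but it differs in two places worth noting. First, to obtain the matching the paper applies the Leibniz determinant expansion (\cref{lem:permutation}) to an invertible $d\times d$ submatrix of $\*L$, whereas you check Hall's condition directly from the rank inequality $|\{i:\sigma_i\cap J\neq\varnothing\}|\geq\operatorname{rank}\*L_{J,\cdot}=|J|$ and extract an injective system of distinct representatives; this is slightly more direct and, crucially, works uniformly for all $m\geq d$. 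Second, and as a consequence, the paper handles $m>d$ by reduction: for each nonzero column it uses the exchange lemma to embed that column in some invertible $d\times d$ submatrix and reruns the square argument, while you treat all columns simultaneously---matched columns become singletons exactly as in the paper, and unmatched columns are killed by the leftover equality $P_E(\sigma_{i_0}\cap S^E\neq\varnothing)=0$ together with $\bigcup_{S\in\mathcal S}S=[d]$. Both proofs rest on the same nondegeneracy input you flag as the main obstacle, namely that the non-intervened subvector $(Z^e_j)_{j\in S^e}$ has nonsingular covariance; the paper uses this implicitly when it applies \cref{lem:var} to $\*Z^E_S$ rather than to the observational $\*Z$.
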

In words, Theorem \ref{th:disent} shows that under our assumptions, any representation that is as sparse as the ground truth representation (w.r.t. the variance density) must already be causally disentangled. This will motivate the optimization approach of the following section.
The proof of Theorem \ref{th:disent} is inspired by that of \citet[Theorem B.5]{lachapelle_synergies_2023}, and is presented in \cref{app:proof}.

\section{Experiments}
We illustrate the result presented in \cref{th:disent} by providing empirical evidence that we can recover the ground truth causal variables from observed data when our assumptions are met. We provide code to reproduce all of our experimental results\footnote{\url{https://github.com/simonbing/Multi-Node-CRL}}.

\subsection{Practical Algorithm}
Based on our theoretical results presented in \cref{sec:disent} we present an algorithm that implements the main principles of our theorem in terms of practical constraints, to recover the underlying causal representation from data.

Since the observed data \(\tilde{\*Z}\) is generated from the ground truth data \(\*Z\) according to \(\tilde{\*Z} = \*Z \*L\), our algorithm aims to learn a linear map \(\hat{\*L}: \mathbb{R}^m \rightarrow \mathbb{R}^d\), such that the resulting \(\hat{\*Z} = \*Z \*L \hat{\*L}\) is disentangled w.r.t the ground truth \(\*Z\) (cf. \cref{def:disent}). We learn \(\hat{\*L}\) via stochastic gradient descent by optimizing over the loss function
\begin{align}\label{eq:loss}
    \min_{\hat{\*L}} \mathcal{L}_{\textnormal{Var}} + \lambda_e \mathcal{L}_e + \lambda_m \mathcal{L}_m + \lambda_{\textnormal{diag}} \mathcal{L}_{\textnormal{diag}} + \lambda_{\textnormal{norm}} \mathcal{L}_{\textnormal{norm}},
\end{align}
where the individual loss terms are introduced in the following.

Since our theoretical results exploit assumptions on the sparsity of nonzero variance terms across environments, we introduce the matrix \(\*V \in \mathbb{R}^{e \times m}\), where the elements in row \(\*V_{i:}\) contain the variance of \(\tilde{\*Z}\) in environment \(e = i\); \(\*V_{i,j} := \Var(\tilde{Z}^i_j)\). In words, \(\*V\) contains the stacked variance terms of \(\tilde{\*Z}\) in each environment.

\paragraph{Total Variance Support.} The first term in \cref{eq:loss}, \(\mathcal{L}_{\textnormal{Var}}\), captures the assumption that the support of nonzero variance terms, i.e., \(\sum_{i,j}\ind(\*V_{i,j} \neq 0)\), is minimal across environments. Since the indicator function is not differentiable, we use the sigmoid function \(\sigma(x) := \frac{1}{1 + e^{-x}}\) and define
\begin{align*}
    \mathcal{L}_{\textnormal{Var}} := \sum^{e,m}_{i,j} \sigma(\*V_{i,j}).
\end{align*}

\paragraph{Per-Environment Variance Support.} The next term, \(\mathcal{L}_e\), enforces that there is no environment in which all variables are intervened upon at the same time. In terms of \(\*V\), this means that every row \(\*V_{i:}\) contains at least one nonzero entry, which implies that the sum across each row \(i\) is nonzero, i.e., \(\ind((\sum^m_j \*V_{i,j}) \neq 0)\). Again replacing the counting operation of the indicator function with a sum over sigmoid functions, we write
\begin{align*}
    \mathcal{L}_e := -\sum^e_i \sigma\bigr(\sum^m_j \*V_{i,j}\bigr),
\end{align*}
where the negative sign comes from the fact that we want this term to be nonzero.

\paragraph{Per-Dimension Variance Support.} This term reflects the deliberation that there is no dimension \(j \in [m]\) which is intervened upon in all environments. Analogous to \(\mathcal{L}_e\), in terms of \(\*V\) this means that we enforce that each column \(\*V_{:j}\) contains at least one nonzero entry and we write
\begin{align*}
    \mathcal{L}_m := -\sum^m_j \sigma\bigr(\sum^e_i \*V_{i,j}\bigr).
\end{align*}

\paragraph{Diagonal Sparsity.}
Consider the exemplary variance matrix for \(m = 3\) and with three environments
\begin{align*}
    \*V = 
    \begin{bmatrix}
        0 & 0 & *\\
        0 & 0 & *\\
        * & * & 0
    \end{bmatrix},
\end{align*}
where nonzero entries are denoted with \(*\). \(\*V\) satisfies the considerations represented by \(\mathcal{L}_e\) and \(\mathcal{L}_m\) (no nonzero rows or columns), but clearly is in violation of \cref{ass:support}, since \(Z_1\) and \(Z_2\) are always intervened on together. Considering the same number of environments, a variance matrix that satisfies all of the above constraints could, e.g., be
\begin{align*}
    \*V' = 
    \begin{bmatrix}
        0 & 0 & *\\
        * & 0 & 0\\
        0 & * & 0
    \end{bmatrix},
\end{align*}
where all but one variable is intervened on per environment. Let the \(k\)-th diagonal of the \((d \times d)\)-matrix \(\*A\) be defined as \(\textnormal{diag}_k (\*A) := \{\*A_{i,j}\, \forall \, i,j \in [d] \mid (i + k -1) \mod d = j \mod d\}\). In words, \(\textnormal{diag}_k (\*A)\) describes the diagonal elements of the matrix \(\*A\) that "wrap around", where any \(k > 1\) denotes the offset from the main diagonal. For a \(3 \times 3\)-matrix, consider the following example:
\begin{equation*}
    \begin{tabular}{@{} m{.5\textwidth} @{} m{.5\textwidth} @{}}
        \[\*A=
        \begin{bmatrix}
            a_{1,1} & \mathcolor{red}{a_{1,2}} & \mathcolor{blue}{a_{1,3}}\\
            \mathcolor{blue}{a_{2,1}} & a_{2,2} & \mathcolor{red}{a_{2,3}}\\
            \mathcolor{red}{a_{3,1}} & \mathcolor{blue}{a_{3,2}} & a_{3,3}
        \end{bmatrix},\]
        &
        \[
        \begin{aligned}
             \textnormal{diag}_1(\*A) = \{a_{1,1}, a_{2,2}, a_{3,3}\}, \\
             \textnormal{diag}_2(\*A) = \{\mathcolor{red}{a_{1,2}}, \mathcolor{red}{a_{2,3}}, \mathcolor{red}{a_{3,1}}\}, \\
             \textnormal{diag}_3(\*A) = \{\mathcolor{blue}{a_{1,3}}, \mathcolor{blue}{a_{2,1}}, \mathcolor{blue}{a_{3,2}}\}.
        \end{aligned}
        \]
    \end{tabular}
\end{equation*}

If we compare the diagonals of \(\*V\) and \(\*V'\) we notice that \(\*V'\)---which is aligned with our assumptions---has more diagonals that contain only zeros than \(\*V\). Based on this observation, we formulate the loss term \(\mathcal{L}_{\textnormal{diag}}\).

We operationalize the above argument that \(\*V\) should contain as many \(\*0\)-diagonals as possible by recalling the \(\ell_{2,1}\) norm for matrices, defined as \(\lVert A \rVert_{2,1} := \sum^d_{j=1} \lVert A_{:j} \rVert\), where \(\lVert \cdot \rVert\) denotes the Euclidean norm for vectors. Since regularizing with the \(\ell_{2,1}\) norm is known to promote column sparsity \citep{argyriou_convex_2008}, we define an analogous regularization to promote sparsity of diagonals:
\begin{align*}
    \mathcal{L}_{\textnormal{diag}} := \sum^m_{j=1} \lVert \textnormal{diag}_j (\*V) \rVert.
\end{align*}

\paragraph{Norm Regularization.} To prevent \(\*V\) from collapsing to all zeros, we enforce the Frobenius norm of \(\hat{\*L}\) to take a fixed value \(a \in \mathbb{R}_{>0}\). In practice, we choose \(a = 1\) and define
\begin{align*}
    \mathcal{L}_{\textnormal{norm}} := (\lVert \hat{\*L} \rVert - a)^2.
\end{align*}

\subsection{Synthetic Data Generation}\label{ssec:synth_data}
We generate synthetic data by sampling from randomly generated DAGs according to the Erdős–Rényi model \citep{erdos_random_1959}, where we change the number of nodes \(d\) and the probability of an edge being present in the graph \(p\) across experiments. Unless stated otherwise, we consider linear causal mechanisms \(f_j := \sum_{j \in \Pa_j} \alpha_j Z_j + \eta_j\) for all variables \(Z_j\), where we sample the coefficients \(\alpha_j\) independently from \(\mathcal{U}[-0.1, 1.0]\). The random noise \(\eta_j\) is independently sampled from \(\mathcal{N}(0, 0.1)\) for all \(j\). For all experiments, except the one in which we investigate the effect of varying sample size, we sample \(n = 10^5\) data points per environment.

To generate interventional data, we consider environments where for each latent variable \(Z_j\) we intervene on all \emph{other} variables \(Z_i, i \neq j\), satisfying \cref{ass:support}.

Since we only assume access to a mixture \(\tilde{\*Z} = \*Z \*L\) of the causal variables, after sampling according to the procedure detailed above, we randomly sample an invertible matrix \(\*L\) and apply this mixing. For the results shown in this section we sample one \(\*L\) and keep it fixed across runs, to focus on the randomness induced by different initializations. In practice we found that different mixing matrices \(\*L\) do not greatly influence the reported results, which we show with additional experiments in \cref{app:ssec:add_results}.

\begin{figure}
    \centering
    \includegraphics[width=\textwidth]{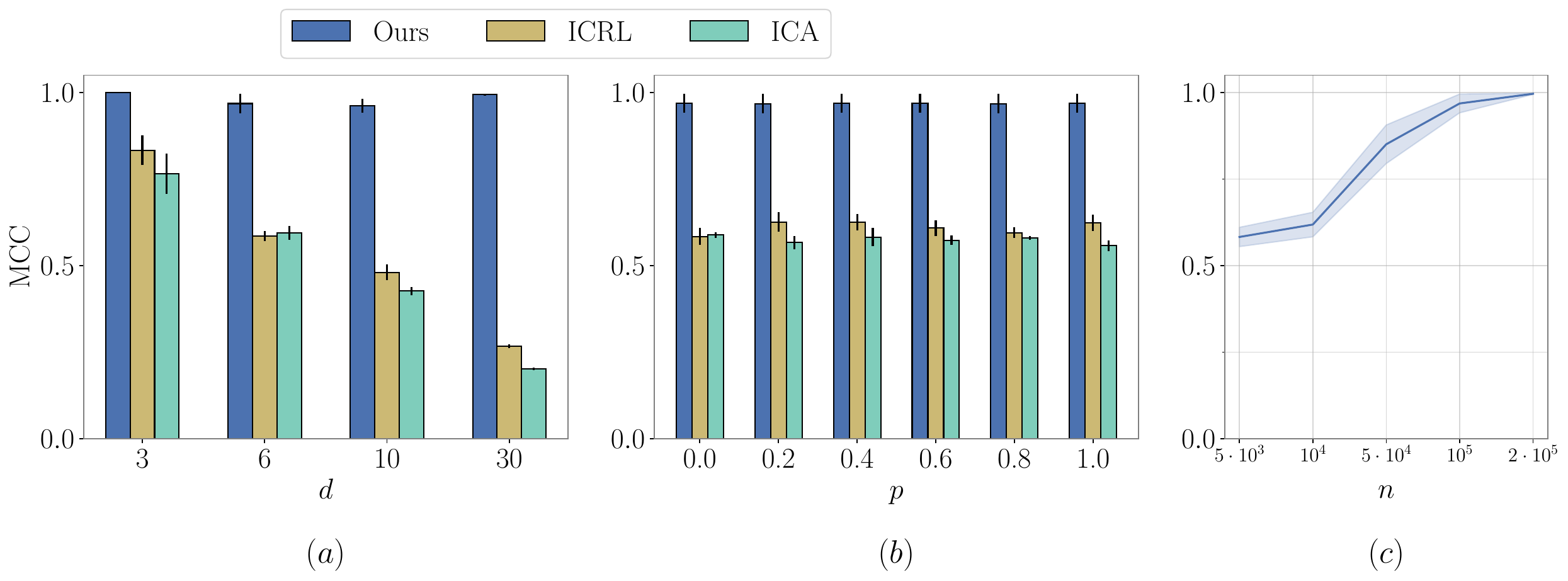}
    \caption{We report the mean MCC score across various experimental conditions, over five random seeds. Error bars or shaded regions indicate the standard error. \textbf{(\(\bm{a}\))} Our model performs well across all considered number of latent variables \(d\), even up to \(d=30\). \textbf{(\(\bm{b}\))} MCC across different probabilities of an edge being present \(p\). Our method achieves near-perfect score across all settings, indicating that we do not implicitly rely on assumptions on the density of the underlying graph. \textbf{(\(\bm{c}\))} MCC for different sample sizes \(n\). Performance increases with sample size and saturates at \(n=2\cdot10^5\). }
    \label{fig:results_overview}
\end{figure}

\subsection{Results}\label{ssec:results}
We provide the results of various experimental settings and compare against two exemplary baselines; linear independent component analysis (ICA) \citep{comon_independent_1994} and interventional causal representation learning (ICRL) \citep{ahuja_interventional_2023}. These are unfair comparisons, since both methods' assumptions are misaligned with our setting; ICA assumes independent, non-gaussian latent variables and ICRL assumes single-node interventions per environment. This comparison is not meant to show that our proposed method outperforms these approaches per se, but rather that our problem setting is not a contrived reformulation of a possibly simpler case, where ICA and ICRL are representative baselines for simpler settings. The main conclusion we can draw from our empirical findings is that our method recovers the ground truth latent variables in a more general problem setting than previous approaches.

To quantify how well we achieve our objective of recovering the equivalence class described in \cref{def:disent} we report the mean correlation coeeficient (MCC) \citep{hyvarinen_unsupervised_2016, khemakhem_variational_2020} (cf. \cref{app:ssec:metric}), which is precisely aligned with our notion of identifiability.

As suggested by our theoretical findings, across all experimental settings, our method achieves an almost perfect MCC score.

\paragraph{Effect of SCM Size.}
We investigate the effect of different underlying SCM sizes by considering \(d \in \{3, 6, 10, 30\}\). Shown in Fig.~\hyperref[fig:results_overview]{2(\(a\))}, we see that our model's performance does not noticeably fall off, even up to \(d = 30\). Both methods we compare against, ICA and ICRL, do not perform well, corroborating that our considered setting is indeed more general than their respective settings, and cannot be solved by either model.

\paragraph{SCM Density.}
In this experiment, we investigate the sensitivity of our approach to the density of the underlying ground truth causal graph. We vary the probability \(p\) of an edge being present in a graph with \(d=6\) nodes from 0 to 1, interpolating between a completely disconnected and a fully connected graph. The results are reported in Fig.~\hyperref[fig:results_overview]{2(\(b\))}. We see that the performance of our approach is equally high across all values of \(p\), underlining that we do not rely on assumptions regarding the sparsity of the underlying causal graph. Interestingly, our method can disentangle independent Gaussian latent variables (\(p = 0\)), which is precisely the case in which ICA fails.

\paragraph{Nonlinear SCMs.}
We consider two different SCMs, both with nonlinear mechanism functions \(f_j\) for each \(Z_j\). Both models consist of \(d=6\) variables, with an adjacency matrix randomly sampled according to the procedure described in \cref{ssec:synth_data}. In the first model, the mechanism functions are defined as \(f_j := \sum_{j \in \Pa_j} Z^2_j + \eta_j\) for all \(Z_j\), while the second model consists of more complex nonlinear mechanisms. See \cref{app:ssec:nonlin} for a detailed description of both models.
We report the MCC scores for both models in \cref{tab:nonlin_exp} and see that our method achieves similarly good results as on linear SCMs, underlining that we do not rely on parametric assumptions on the underlying causal model.

\paragraph{Number of Samples.}
We investigate the influence of sample size by increasing the number of data points \(n\) sampled per environment, for an SCM with \(d=6\). The results are visualized in Fig.~\hyperref[fig:results_overview]{2(\(c\))}, indicating that disentanglement is achieved after observing around \(10^5\) samples per environment, while the score saturates and perfect results are achieved after \(2\cdot10^5\) samples.

\begin{table}
    \centering
    \begin{tabular}{cc}
    \toprule
        Model & MCC\\
        \midrule
        Nonlin. SCM 1 & \(0.96 \pm 0.03\)\\
        Nonlin. SCM 2 & \(0.97 \pm 0.05\)\\
        \bottomrule
    \end{tabular}
    \caption{MCC (mean \(\pm\) standard error) over five different random seeds for two different nonlinear models. Our method performs equally well as for linear SCMs.}
    \label{tab:nonlin_exp}
\end{table}

\section{Related Work}
First exemplified by the impossibility result of nonlinear independent component analysis (ICA) \citep{hyvarinen_nonlinear_1999}, the general problem of representation learning is known to be heavily underconstrained \citep{locatello_challenging_2019}. Just as research in ICA has progressed by making assumptions explicit and subsequently exploiting these assumptions for inference \citep{hyvarinen_nonlinear_2017,halva_hidden_2020, gresele_independent_2021,morioka_connectivity-contrastive_2023}, recent works in causal representation learning works propose various assumptions to enable identifiability. 
One such inductive bias is to assume and exploit time structures, e.g., in \citet{lippe_citris_2022,lippe_causal_2022,lippe_biscuit_2023} by assuming knowledge of interventions targets or types, in \citet{yao_learning_2021,yao_temporally_2022} by the assumption of nonstationarity or in \citet{lachapelle_disentanglement_2022, lachapelle_nonparametric_2024} by imposing sparsity constraints on the underlying SCM.
A complementary line of works focuses on constraining the kind of mixing the latent variables have undergone, with a particular interest in the case of linear mixing. \citet{squires_linear_2023} show results for linear SCMs that undergo linear mixing, generalized by \citet{buchholz_learning_2023} to nonparametric SCMs, and \citet{varici_score-based_2023} provide a score-based approach for learning representations under linear mixtures.
One common assumption made across approaches is some kind of heterogeneity assumption on the available data, either induced by counterfactual pairs \citep{von_kugelgen_self-supervised_2021,brehmer_weakly_2022} or interventional data, e.g., hard do-interventions as in \citet{ahuja_interventional_2023} or soft interventions as in \citet{zhang_identifiability_2023}. What unites almost all works that include identifiability results based on some form of interventional (or counterfactual) data is the assumption that per environment, interventions only target a \emph{single} node \citep{brehmer_weakly_2022, squires_linear_2023, buchholz_learning_2023, ahuja_interventional_2023, zhang_identifiability_2023, varici_score-based_2023, liang_causal_2023, von_kugelgen_nonparametric_2023}. 
While there exist identifiability results with multi-node interventions in the time series setting they come with additional assumptions or limitations that we do not require. The results of \citet{lachapelle_disentanglement_2022, lachapelle_nonparametric_2024} only hold for an empty graph when only a single time step is considered and the results of \citet{lippe_causal_2022, lippe_citris_2022, lippe_biscuit_2023} require observing an auxiliary variable related to interventions at a given time step to show identifiability. \citet{ahuja_multi-domain_2023} and \citet{liang_causal_2023} also consider multi-node interventions, however they require additional assumptions and only guarantee identifiability up to blocks of variables, as opposed to our stronger component-wise identifiability.

An approach that utilizes a closely related assumption of sparsity to ours is presented by \citet{lachapelle_synergies_2023}. The authors consider a prediction problem in the context of representation learning, where the source of heterogeneity does not stem from interventions on the underlying SCM as in our case, but from multiple prediction tasks that share the same representation as potential predictors.
By assuming that only a sparse subset of all covariates are used per task, the latents can be identified up to permutation and rescaling. Analogously, we guarantee identifiability by assuming that a sparse subset of the ground truth latent variables have nonzero variance under interventions.

Another line of works frames causal representation learning as being closer to the setting of classical causal structure learning \citep{spirtes_causation_2001} with latent variables, commonly assuming that at least some nodes of the target graph are observed. While we asssume the observed variables to be deterministic functions of the latents, these works consider observational noise. Most approaches here rely on a variation of the \emph{pure children assumption}, stating that each observed variable has only a single latent parent \citep{silva_learning_2006}. \citet{cai_triad_2019} consider the case where observations are linear transformations of the latents and each latent has two pure children. \citet{xie_generalized_2020} generalize the preceding assumptions to allow latents with multiple children, formalized in their Generalized Independent Noise condition and in later work use this condition to learn hierarchical latent variable models \citep{xie_identification_2022}.

\section{Discussion}
In this work, we present a novel identifiability result for learning causal representations under linear mixing. Our main contribution lies in the generalization of a large body of preceding works 
that assume that the environments across which the data is collected correspond to single-node or atomic interventions on the latent variables, to the case where multi-node or non-atomic interventions are allowed as well. To enable identifiability from multi-node interventional data, we introduced a novel notion of sparsity regarding latent dimensions with nonzero variance terms. 

In addition to our theoretical contribution, we presented empirical evidence in the form of a proof-of-concept practical algorithm that recovers the latent variables of an underlying causal model in synthetic data settings where our assumptions are met.

\paragraph{Limitations.}
While linear mixing is also directly assumed in \citet{squires_linear_2023}, it is still a strong assumption. However, we argue that identifying causal variables from linear mixtures can be seen as a crucial module in a larger learning pipeline. Many complementary approaches result in recovering linear mixtures, such as assuming polynomial mixing functions \citep{ahuja_interventional_2023}, considering multi-task prediction problems \citep{lachapelle_synergies_2023}, learning of nonlinear causal effects with anchor variables \citep{saengkyongam_identifying_2023}, or a large class of problems where deep neural networks are used \citep{roeder_linear_2021}.

Our proposed practical implementation of our theory into a learning algorithm provides empirical evidence that our assumptions suffice for identifying causal variables using multi-node data. However, we only consider one class of interventions that satisfy our assumptions, while there are of course many more families of interventions that do so as well. While our algorithm is likely not the most general implementation of our theory, we stress that it merely stands as an empirical proof-of-concept and generalizing it to all possible cases where our assumptions are fulfilled was beyond the scope of this work.

\paragraph{Outlook.}
Since we consider hard (or do-) interventions in this work, an interesting avenue for future research would be to explore if this assumption can be relaxed to a more general family of interventions, similarly to how \citet{zhang_identifiability_2023} generalize the work of \citet{ahuja_interventional_2023}. Beyond that, exploring if our sparsity of nonzero variance dimensions principle is also applicable to nonlinear mixtures presents itself as a natural next step.

\acks{The authors thank Tom Hochsprung for fruitful discussions and comments, as well as the anonymous reviewers for their feedback and suggestions that helped improve the manuscript. This work received funding from the European Research Council (ERC) Starting Grant CausalEarth under the European Union’s Horizon 2020 research and innovation program (Grant Agreement No. 948112). S.B. received support from the German Academic Scholarship Foundation.}

\newpage
\bibliography{references}

\newpage
\appendix

\section{Proofs}
\subsection{Auxiliary Lemmata}\label{app:ssec:lemmas}
In order to prove our main result in \cref{th:disent}, we require some additional lemmata, which we state and prove here.

\setcounter{theorem}{2}
\begin{lemma}[Non-vanishing variance under mixing.]\label[lemma]{app:lem:var}
\label[assumption]{ass:var}
    For all invertible \(\*L \in \mathbb{R}^{d \times d}\),
    \begin{align*}
        \forall j \in [d], \Var((\*Z \*L)_j) \neq 0.
    \end{align*}
\end{lemma}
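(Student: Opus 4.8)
The plan is to reduce the statement to a fact about arbitrary nonzero linear combinations of the latent variables. Since \(\*L\) is invertible, none of its columns can vanish, so the \(j\)-th coordinate of the row vector \(\*Z\*L\) is \((\*Z\*L)_j = \sum_{i=1}^d L_{ij} Z_i\) with coefficient vector \(c := (L_{1j}, \dots, L_{dj}) \neq \*0\). It therefore suffices to show that \(\Var\big(\sum_{i=1}^d c_i Z_i\big) > 0\) for every \(c \neq \*0\), and then to specialize to each column of \(\*L\) in turn.

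First I would fix a topological ordering of \(\mathcal{G}\) and relabel so that \(\Pa_i \subseteq \{1, \dots, i-1\}\); consequently each \(Z_i\) is a deterministic function of the noise terms \(\eta_1, \dots, \eta_i\) only. Let \(k := \max\{i : c_i \neq 0\}\) be the largest index in the support of \(c\), so that \(\sum_i c_i Z_i = \sum_{i \le k} c_i Z_i\) with \(c_k \neq 0\).

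The key step is to isolate the contribution of the single exogenous noise \(\eta_k\) via the law of total variance, conditioning on \(\eta_{-k} := (\eta_i)_{i \neq k}\). For \(i < k\) each \(Z_i\) depends only on \(\eta_1, \dots, \eta_i\), all contained in \(\eta_{-k}\), so \(\sum_{i<k} c_i Z_i\) is constant given \(\eta_{-k}\). Using the additive-noise form \(Z_k = f_k(Z_{\Pa_k}) + \eta_k\) assumed for the main result, together with the fact that \(Z_{\Pa_k}\) is measurable with respect to \(\eta_{-k}\), I get that conditionally on \(\eta_{-k}\) the whole sum is affine in \(\eta_k\), namely \(\sum_{i \le k} c_i Z_i = \tilde{C}(\eta_{-k}) + c_k \eta_k\). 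Since \(\eta_k\) is independent of \(\eta_{-k}\) with \(\Var(\eta_k) > 0\), the conditional variance equals \(c_k^2 \Var(\eta_k) > 0\), and hence \(\Var\big(\sum_i c_i Z_i\big) \ge \mathbb{E}\big[\Var\big(\sum_i c_i Z_i \mid \eta_{-k}\big)\big] = c_k^2 \Var(\eta_k) > 0\), which completes the argument upon taking \(c\) to be the \(j\)-th column of \(\*L\).

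The main obstacle, and the only place where the SCM structure is genuinely needed, is this last step: guaranteeing that \(Z_k\) truly carries variation in \(\eta_k\) that the lower-indexed terms cannot cancel. Here I rely on (i) the topological ordering, which ensures that for \(i<k\) the variable \(Z_i\) is \(\eta_k\)-free, and (ii) the additive (or at least non-degenerate) dependence of \(Z_k\) on \(\eta_k\), so that no choice of the remaining coefficients can enforce exact cancellation almost surely. Note that without some such genuine-dependence hypothesis the lemma would be false—for instance, a mechanism \(f_j\) ignoring \(\eta_j\) could render \(Z_j\) constant and thus violate the claim already for \(c = e_j\)—so this assumption is essential rather than merely convenient.
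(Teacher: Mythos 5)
Your proof is correct and shares the paper's central structural idea: pass to a topological ordering, write each \(Z_i\) as a function of \(\eta_1,\dots,\eta_i\), and focus on the largest index \(k\) in the support of the column, whose own noise term cannot be cancelled by lower-indexed variables. Where you diverge is in how that non-cancellation is established. The paper argues by contradiction under a purely qualitative non-degeneracy hypothesis (no \(Z_i\) is independent of its own noise \(\eta_i\), with general mechanisms \(f_i(Z_{\Pa_i},\eta_i)\)): if the combination were a.s. constant, one could solve for \(Z_{i_s}\) as a function of the other noise terms, forcing \(Z_{i_s}\perp\eta_{i_s}\). You instead invoke the additive-noise form \(Z_k=f_k(Z_{\Pa_k})+\eta_k\) and the law of total variance conditioning on \(\eta_{-k}\), obtaining the explicit lower bound \(\Var(\sum_i c_iZ_i)\ge c_k^2\Var(\eta_k)>0\). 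Your route buys a quantitative bound and a cleaner direct argument, and it proves the slightly stronger statement for an arbitrary nonzero coefficient vector \(c\) (which is in fact the form needed when the lemma is applied to \(\*Z_S^E\*L_{S,j}\) in the main theorem); the paper's route covers non-additive noise at the cost of positing non-degeneracy. Since additivity plus \(\Var(\eta_k)>0\) implies the paper's non-degeneracy condition, and the paper explicitly restricts its main result to additive-noise SCMs, your assumption is consistent with the intended setting. One cosmetic point in your favor: you correctly reduce to positivity of the variance, whereas the paper's step from \(\Var(\tilde Z)=0\) to \(\tilde Z=0\) a.s.\ should read \(\tilde Z=\mathbb{E}[\tilde Z]\) a.s.; your version avoids this slip entirely.
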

\begin{proof}

    We allow for arbitrary SCMs with jointly independent noise terms, which are assumed to be non-degenerate in the sense that no variable \(Z_i\) is independent of its own noise term \(\eta_i\). In other words, no variable is deterministically determined by its parents.
    
    Let us label the $Z_i$'s such that they are topologically ordered, i.e., $Z_i$ is causally prior to $Z_j$ for $i<j$ and $Z_d$ is at bottom of the topological order. Each $Z_i$ can be expressed as a function of its parents and respective noise term, i.e.,
    \begin{equation}
        Z_i = f_i(Z_1, \ldots Z_{i-1}, \eta_i) \ ,
    \end{equation}
    where in this formulation we do not require functions $f_i$ to be minimal, that is $f_i$ is not required to depend on all of its input arguments. Additionally the noises $\eta_i$ are jointly independent and have non-vanishing variance. Solving the latent variables in terms of the independent noises $\eta$ yields, 
        \begin{equation}
        Z_i = g_i(\eta_1, \ldots ,\eta_{i}) \ ,
    \end{equation}
    where the $g_i$'s can be calculated by successively substituting structural equations of parents into those of children. \\
    Consider an invertible matrix $\*L \in \mathbb{R}^{d \times d}$ and an arbitrary index $j \in [d]$. Since $\*L$ is invertible, its $j$-th column must have nonzero entries which we will denote by $\alpha_1 = (\*L)_{i_1,j} ,\dots,\alpha_s = (\*L)_{i_s,j} \in \mathbb{R}\backslash\{0 \}$ where $i_1 < i_2< \dots < i_s$. Then, we can write 
    \begin{equation}\label{eq:solutionSCM}
        \tilde{Z}:= (\*Z \*L)_j = \sum\limits_{k=1}^s\alpha_k Z_{i_k}. 
    \end{equation}

    Now, assume \(\Var(\tilde{Z})=0\) from which we will derive a contradiction. Since the square root of the variance is a norm (the \(L^2\)-norm to be precise), \(\Var(\tilde{Z})=0\) implies that
    \begin{equation}
        \sum_{k=1}^s\alpha_k Z_{i_k} = \sum_{k=1}^s \alpha_k g_k(\eta_{i_1}, \ldots, \eta_{i_k}) = 0
    \end{equation}
    almost surely with respect to the noise distribution. Since all \(\alpha_k\)'s are nonzero we have
    \begin{equation}
        Z_{i_s} = -\frac{1}{\alpha_s} \sum_{k=1}^{s-1} \alpha_k g_k(\eta_{i_1}, \ldots, \eta_{i_k}).
    \end{equation}
    Hence, we have expressed \(Z_{i_s}\) as a function of the noise terms up to \(i_{s-1}\), which by the joint independence of noise terms implies \(Z_{i_s} \perp \eta_{i_s}\). This contradicts our non-degeneracy assumption stated at the beginning of the proof.
\end{proof}

Before presenting the proof of our main theorem, we require an additional technical lemma, adapted from \citet[Lemma B.1]{lachapelle_synergies_2023}. We restate it here.

\begin{lemma}[Invertible matrices contain a permutation, \citep{lachapelle_synergies_2023}]
\label[lemma]{lem:permutation}
    Let \(\*L \in \mathbb{R}^{d \times d}\) be an invertible matrix. There, there exists a permutation \(\sigma: [d] \rightarrow [d]\) such that \(\*L_{i, \sigma(i)} \neq 0\) for all \(i \in [d]\).
\end{lemma}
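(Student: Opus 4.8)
The plan is to read off the permutation directly from the Leibniz expansion of the determinant, which turns the statement into an essentially one-line observation. Recall that for any matrix $\*L \in \mathbb{R}^{d\times d}$ the determinant admits the representation
\begin{align*}
    \det(\*L) = \sum_{\sigma \in S_d} \operatorname{sgn}(\sigma) \prod_{i=1}^d \*L_{i,\sigma(i)},
\end{align*}
where $S_d$ denotes the symmetric group on $[d]$ and $\operatorname{sgn}(\sigma) \in \{-1,+1\}$. Since $\*L$ is assumed invertible, we have $\det(\*L) \neq 0$, so the sum on the right-hand side cannot vanish.

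First I would invoke the elementary fact that a finite sum which is nonzero must contain at least one nonzero summand: there exists some $\sigma \in S_d$ with $\operatorname{sgn}(\sigma)\prod_{i=1}^d \*L_{i,\sigma(i)} \neq 0$. As the sign is always nonzero, this forces the product $\prod_{i=1}^d \*L_{i,\sigma(i)}$ itself to be nonzero. A product of real numbers is nonzero precisely when every factor is nonzero, hence $\*L_{i,\sigma(i)} \neq 0$ for all $i \in [d]$, which is exactly the desired conclusion. The permutation furnished by the nonvanishing determinant term is the $\sigma$ we seek.

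I expect no genuine obstacle here, as the whole argument reduces to reading the determinant formula and noting that a nonzero sum has a nonzero term. For completeness I would remark that the same result can be obtained combinatorially: form the bipartite graph on rows and columns with an edge $(i,j)$ whenever $\*L_{i,j}\neq 0$, so that a permutation $\sigma$ as sought is precisely a perfect matching. Its existence follows from Hall's marriage theorem, since a violation of Hall's condition would confine the nonzero entries of some set of $k$ rows to strictly fewer than $k$ columns, making those rows linearly dependent and contradicting invertibility. In that route the verification of Hall's condition from invertibility would be the only nontrivial point, but the determinant argument sidesteps it entirely and is the one I would adopt.
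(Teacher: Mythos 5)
Your proposal is correct and follows essentially the same route as the paper: expand $\det(\*L)$ via the Leibniz formula, note that a nonzero sum must have a nonzero term, and conclude that every factor $\*L_{i,\sigma(i)}$ in that term is nonzero. The Hall's-theorem aside is a nice alternative perspective but is not needed, and the paper does not use it.
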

\begin{proof}
    Since \(\*L\) is invertible, its determinant is nonzero, i.e.,
    \begin{align*}
        \det(\*L) := \sum_{\sigma \in \mathfrak{S}_d} \textnormal{sign}(\sigma) \prod^{d}_{i=1} \*L_{i, \sigma(i)} \neq 0,
    \end{align*}
    where \(\mathfrak{S}_d\) is the set of \(d\)-permutations. This equation implies that at least one term of the sum is nonzero, meaning there exists \(\sigma \in \mathfrak{S}_d\) such that for all \(i \in [d], \*L_{i, \sigma(i)} \neq 0\).
\end{proof}

\subsection{Proof of \cref{th:disent}}\label{app:proof}

Our main proof is inspired by that of \citet[Theorem B.5]{lachapelle_synergies_2023}. While they prove that enforcing a certain notion of sparsity in the context of multi-task prediction problems allows for learning disentangled representations, we can adapt their argument to the setting where we have data from multiple environments, by considering our specific definition of sparsity w.r.t. nonzero variance terms in interventional data (cf. \cref{ssec:ass}).

\setcounter{theorem}{4}
\begin{theorem}[Disentanglement via intervention sparsity.]
    Assume the data generating process described in \cref{ssec:problem}. Let \(\*L \in \mathbb{R}^{d \times m}\) be an injective matrix and \(\hat{\*Z} = \*Z \*L\).
    Suppose \cref{ass:support} holds. Then, if 
    \begin{align*}
        \mathbb{E}_{P_E} \norm{\hat{\*Z}^E} \leq \mathbb{E}_{P_E} \norm{\*Z^E},
    \end{align*}
    \(\hat{\*Z}\) is causally disentangled up to redundancies w.r.t. \(\*Z\) (cf. \cref{def:disent}), where $\norm{\*Z^{\cdot} }$ is defined as the function $\norm{\*Z^{\cdot}}: (\mathcal{E},P_E) \to \mathbb{R}, \ e \mapsto \norm{\*Z^e}$.

    In particular, if \(m=d\), it holds that \(\*L = \*D \*P\), where \(\*D\) is a diagonal invertible matrix and \(\*P\) is a permutation matrix.
\end{theorem}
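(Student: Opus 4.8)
The plan is to establish independently the reverse inequality \(\mathbb{E}_{P_E}\norm{\hat{\*Z}^E} \ge \mathbb{E}_{P_E}\norm{\*Z^E}\) together with a characterization of when it is tight; combined with the hypothesized \(\le\), this forces equality, and the equality case will yield exactly the structure of \cref{def:disent}. First I would reduce the variance-density functionals to combinatorial quantities on the support pattern of \(\*L\). Writing \(T_k := \{j \in [d] \mid \*L_{j,k} \neq 0\}\) for the support of the \(k\)-th column, and recalling that \(S^e\) collects the non-intervened (hence nonzero-variance) indices, the \(k\)-th mixed coordinate in environment \(e\) equals \(\sum_{j \in S^e}\*L_{j,k} Z^e_j\) up to an additive constant. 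I would then argue that \(\Var(\hat{Z}^e_k)=0\) if and only if \(\*L_{j,k}=0\) for every \(j \in S^e\), which gives
\begin{align*}
    \norm{\hat{\*Z}^e} = \sum_{k=1}^m \ind(S^e \cap T_k \neq \emptyset), \qquad \norm{\*Z^e} = |S^e|.
\end{align*}

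The heart of this first step, and what I expect to be the main obstacle, is the ``only if'' direction: it is the content of \cref{app:lem:var}, but applied to the \emph{post-intervention} sub-model on the variables indexed by \(S^e\). Under a hard intervention the structural equations of the variables in \(S^e\) have some parents frozen to constants, yet their noise terms remain jointly independent and non-degenerate and each \(Z^e_j\) still depends on its own noise \(\eta_j\); so the topological-ordering argument of \cref{app:lem:var} should carry over to show that any nonzero linear combination \(\sum_{j\in S^e}\*L_{j,k}Z^e_j\) has nonzero variance. Making this transfer precise (verifying that the sub-model is itself a valid SCM with independent non-degenerate noises) is the delicate point.

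Next I would linearize the expectations by swapping the order of summation,
\begin{align*}
    \mathbb{E}_{P_E}\norm{\*Z^E} = \sum_{j=1}^d P_E(j \in S^E), \qquad \mathbb{E}_{P_E}\norm{\hat{\*Z}^E} = \sum_{k=1}^m P_E(S^E \cap T_k \neq \emptyset).
\end{align*}
Since \(\*L\) is injective it has full row rank \(d\), so some \(d\) of its columns form an invertible submatrix; applying \cref{lem:permutation} to that submatrix produces an injection \(\phi:[d]\to[m]\) with \(\*L_{j,\phi(j)}\neq 0\), i.e. \(j \in T_{\phi(j)}\). Because \(\{j\in S^E\}\subseteq\{S^E\cap T_{\phi(j)}\neq\emptyset\}\) and \(\phi\) is injective, discarding the unmatched (nonnegative) columns yields
\begin{align*}
    \sum_{k=1}^m P_E(S^E\cap T_k\neq\emptyset) \ge \sum_{j=1}^d P_E(S^E\cap T_{\phi(j)}\neq\emptyset) \ge \sum_{j=1}^d P_E(j\in S^E),
\end{align*}
the desired reverse inequality.

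Finally I would run the equality analysis. Under the hypothesis both inequalities above become equalities, forcing (i) \(P_E(S^E\cap T_k\neq\emptyset)=0\) for every unmatched column \(k\notin\phi([d])\), and (ii) \(P_E(S^E\cap T_{\phi(j)}\neq\emptyset,\,j\notin S^E)=0\) for each \(j\). For (ii), \cref{ass:support} supplies, for any \(i\neq j\), some \(S\in\mathcal{S}\) with \(j\notin S\) and \(i\in S\); condition (ii) forces \(S\cap T_{\phi(j)}=\emptyset\), hence \(i\notin T_{\phi(j)}\), so \(T_{\phi(j)}=\{j\}\). For (i), \cref{ass:support} also rules out any variable being intervened in every environment, so each \(i\in[d]\) lies in some \(S\in\mathcal{S}\); an unmatched column with \(i\in T_k\) would then contradict \(S\cap T_k=\emptyset\), so all unmatched columns vanish. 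Thus every column of \(\*L\) has at most one nonzero entry (condition 1 of \cref{def:disent}) and the \(d\) matched columns carry the nonzero entries (condition 2), giving disentanglement up to redundancies. When \(m=d\), invertibility upgrades \(\phi\) to a bijection, so the single-entry-per-column pattern is a permutation support and \(\*L=\*D\*P\), completing the proof.
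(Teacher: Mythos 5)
Your proposal is correct and follows essentially the same route as the paper's proof: both reduce the two variance densities to the combinatorial quantities \(\sum_{k}\ind(S\cap N_k\neq\varnothing)\) versus \(\sum_{j}\ind(j\in S)\) (using \cref{lem:var} on the post-intervention variables for the nontrivial direction), extract a partial permutation from the injectivity of \(\mathbf{L}\) via \cref{lem:permutation}, note that the hypothesized inequality must then hold termwise with equality, and feed \cref{ass:support} through the complement/De Morgan step to conclude \(N_{\sigma(j)}=\{j\}\). The only real difference is organizational: you handle general \(m\ge d\) in a single pass with an injection \(\phi:[d]\to[m]\) and kill the unmatched columns in the equality analysis, whereas the paper proves \(m=d\) first and then extends by passing to \(d\times d\) invertible submatrices; your explicit care in checking that \cref{lem:var} transfers to the intervened sub-model is a point the paper applies silently.
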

\begin{proof}
    We begin here with the case where \(m=d\) from which we will then derive the more general case with \(m>d\).

    Since \(\hat{\*Z} = \*Z \*L\) we can write \(\mathbb{E} \norm{\*Z^E \*L} \leq \mathbb{E} \norm{\*Z^E}\), where we drop the subscript of the expectation for brevity.

    We write
    \begin{align*}
        \mathbb{E}\norm{\*Z^E} 
        &= \mathbb{E}_{p(S)} \mathbb{E}_{P_{E \mid S}}\biggr[\sum^d_{j = 1} \ind(\Var(Z^E_j) \neq 0) \mid S \biggr] \\
        &= \mathbb{E}_{p(S)} \sum^d_{j = 1} \mathbb{E}_{P_{E \mid S}}\bigr[ \ind(\Var(Z^E_j) \neq 0) \mid S \bigr] \\
        &= \mathbb{E}_{p(S)} \sum^d_{j = 1} P_{E\mid S}\bigr[\Var(Z^E_j) \neq 0\bigr] \\
        &= \mathbb{E}_{p(S)} \sum^d_{j = 1} \ind(j \in S),
    \end{align*}
    where the last step uses the definition of \(S\).

    Next, we apply a similar treatment to \(\mathbb{E}\norm{\*Z^E \*L'}\) and write
    \begin{align*}
        \mathbb{E}\norm{\*Z^E \*L} 
        &= \mathbb{E}_{p(S)} \mathbb{E}_{P_E}\biggr[\sum^d_{j = 1} \ind(\Var(\*Z^E \*L_{:j}) \neq 0) \mid S \biggr] \\
        &= \mathbb{E}_{p(S)} \sum^d_{j = 1} \mathbb{E}_{P_E}\bigr[ \ind(\Var(\*Z^E \*L_{:j}) \neq 0) \mid S \bigr] \\
        &= \mathbb{E}_{p(S)} \sum^d_{j = 1} P_{E\mid S}\bigr[\Var(\*Z^E \*L_{:j}) \neq 0\bigr] \\
        &= \mathbb{E}_{p(S)} \sum^d_{j = 1} P_{E\mid S}\bigr[\Var(\*Z_S^E \*L_{S,j}) \neq 0\bigr].
    \end{align*}
    Let \(N_j\) denote the nonzero entries of \(\*L\) in column \(j\), i.e., \(N_j := \{i \in [d] \mid \*L_{i,j} \neq 0\}\). Now, we take the intersection of \(S\) and \(N_j\), \(S \cap N_j\), and consider two possible cases. 

    First, assume \(S \cap N_j = \varnothing\). Then, we can see that \(\*L_{S,j} = \*0\) and it follows that 
    \begin{align*}
        P_{E\mid S}\bigr[\Var(\*Z_S^E \*L_{S,j}) = 0\bigr] = 1.
    \end{align*}

    For the second case, consider \(S \cap N_j \neq \varnothing\). Now, we can see that \(\*L_{S,j} \neq \*0\). 
    By \cref{lem:var} we have that
    \begin{align*}
        \Var(\*Z_S^E \*L_{S,j}) \neq 0,
    \end{align*}
    and can therefore directly conclude that
    \begin{align*}
        P_{E\mid S}\bigr[\Var(\*Z_S^E \*L_{S,j}) = 0\bigr] = 0.
      \end{align*}

    Taking both considered cases and recalling that
    \begin{align*}
        P_{E\mid S}\bigr[\Var(\*Z_S^E \*L_{S,j}) \neq 0\bigr] = 1 - P_{E\mid S}\bigr[\Var(\*Z_S^E \*L_{S,j}) = 0\bigr],
    \end{align*}
    we can write
    \begin{align*}
        P_{E\mid S}\bigr[\Var(\*Z_S^E \*L_{S,j}) \neq 0\bigr]
        &= 1 - \ind(S \cap N_j = \varnothing)\\
        &= \ind(S \cap N_j \neq \varnothing),
    \end{align*}
    and consequently
    \begin{align*}
        \mathbb{E}\norm{\hat{\*Z}^E \*L} = \mathbb{E}_{p(S)} \sum^d_{j = 1} \ind(S \cap N_j \neq \varnothing).
    \end{align*}

    Recalling our original constraint, we can now write
    \begin{align}\label{eq:constrain_re}
        \mathbb{E}_{p(S)} \sum^d_{j = 1} \ind(S \cap N_j \neq \varnothing) \leq \mathbb{E}_{p(S)} \sum^d_{j = 1} \ind(j \in S).
    \end{align}
    Since \(\*L\) is an invertible matrix, by \cref{lem:permutation}, we know that there exists a permutation \(\sigma: [d] \rightarrow [d]\), such that \(\*L_{j,\sigma(j)} \neq 0, \forall j \in [d]\). Since the sum over all dimensions \(d\) on the LHS of \cref{eq:constrain_re} is invariant under \(\sigma\), we can apply this permutation to the LHS and by collecting terms write
    \begin{align}\label{eq:constrain_leq_0}
        \mathbb{E}_{p(S)} \sum^d_{j = 1} \ind(S \cap N_{\sigma(j)} \neq \varnothing) - \ind(j \in S) \leq 0.
    \end{align}
    Now, notice \(\forall j \in [d]\)
    \begin{align*}
        \ind(S \cap N_{\sigma(j)} \neq \varnothing) - \ind(j \in S) \geq 0,
    \end{align*}
    which holds since whenever \(j \in S\) it also holds that \(j \in S \cap N_{\sigma(j)}\), since \(\sigma\) by definition permutes the columns of \(\*L\) such that \(\forall j\) there is a nonzero entry \(\*L_{j, \sigma(j)}\), and thus \(j \in N_{\sigma(j)}\).
    
    The overall expression in \cref{eq:constrain_leq_0} is non-positive, while all elements of the sum are non-negative, from which we conclude that all elements of the sum must be equal to zero. We can thus write
    \begin{align*}
        \forall S \in \mathcal{S}, \forall j \in [d], \ind(S \cap N_{\sigma(j)} \neq \varnothing) = \ind(j \in S).
    \end{align*}
    Consequently,
    \begin{align*}
        \forall S \in \mathcal{S}, \forall j \in [d], j \notin S \implies S \cap N_{\sigma(j)} = \varnothing.
    \end{align*}
    Since \(S \cap N_{\sigma(j)} = \varnothing \iff N_{\sigma(j)} \subseteq S^{c}\), where \(S^{c}\) denotes the complement of \(S\), we can write
    \begin{align}
        \forall S \in \mathcal{S}, &\forall j \in [d], j \notin S \implies N_{\sigma(j)} \subseteq S^{c}\\
        &\forall j \in [d], N_{\sigma(j)} \subseteq \bigcap_{S \in \mathcal{S} \mid j \notin S} S^c.\label{eq:N_sub}
    \end{align}
    Recall \cref{ass:support}, which states that
    \begin{align}\label{eq:asspt_cover}
        \bigcup_{S \in \mathcal{S} \mid j \notin S} S = [d]\setminus\{j\}.
    \end{align}
    If we take the complement of both sides of \cref{eq:asspt_cover} and apply De Morgan's law, we get
    \begin{align*}
        \bigcap_{S \in \mathcal{S} \mid j \notin S} S^c = \{j\},
    \end{align*}
    which we insert into \cref{eq:N_sub} and finally see that \(N_{\sigma(j)} = \{j\}\). From this we conclude that \(\*L = \*D \*P\), where \(\*D\) is a diagonal invertible matrix and \(\*P\) is a permutation matrix.

    Now, consider the case where \(m>d\).
    Let \(\*L'\) be an invertible \(d \times d\) submatrix of \(\*L\). Note that if \(\*L'\) is a submatrix of \(\*L\), the original constraint in the theorem implies that \(\mathbb{E} \norm{\*Z^E \*L'} \leq \mathbb{E} \norm{\*Z^E}\). 
    
    For any column \(j\) of \(\*L\) with nonzero entries, there exists a \(d \times d\) invertible submatrix \(\*L'(j)\) that contains the column \(j\) and whose columns are all linearly independent. This holds by the following argument: there is a submatrix \(\*L''\) of \(\*L\) with \(d\) independent columns, since \(\*L\) has rank \(d\). If \(j\) is not already a column of \(\*L''\), by the exchange theorem from linear algebra we can replace some column of \(\*L''\) with \(j\) to obtain the submatrix \(\*L'(j)\) with the desired properties.

    Now, for each such \(\*L'(j)\) we can apply the argument of the proof for the case \(m=d\) and deduce that any (nonzero) column \(j\) of \(\*L\) consists of a single nonzero entry and thus each \(\hat{Z}_j\) must be zero, or a scalar multiple of a latent \(Z_i\).
\end{proof}

The last step of our proof for the case \(m=d\) underlines why we make \cref{ass:support}: after applying De Morgan's law, this assumption leaves us with a set that contains a single element, which in turn directly allows us to conclude that the mixing matrix \(\*L\) consists of a diagonal matrix \(\*D\) and a permutation \(\*P\). Just as \citet{lachapelle_synergies_2023} state, if we did not make \cref{ass:support} and this final step resulted not in a singleton, but a set with multiple elements, this would not allow us to directly conclude that \(\*L = \*D \*P\). If this were the case, we conjecture that we would only be able to achieve partial disentanglement of blocks of variables \citep{lachapelle_partial_2022}.

\section{Experimental Details}
In this section, we present additional details to reproduce all of our experimental results. We begin by outlining the synthetic data generating process and what exactly is subject to randomization across runs, continue with implementation and training details of our proposed method, provide details on the nonlinear SCMs we use in our experiments, introduce the metric we use to quantify our results and finally present additional experimental results not reported in the main text.

\subsection{Synthetic Data Generation}
Across all experimental settings, we use a train/test split of 75/25, meaning if we have \(n=100000\) samples per environment, we use \(n_{\textnormal{train}} = 75000\) samples for training and \(n_{\textnormal{test}} = 25000\) samples to evaluate our metric.

For each random seed, we resample the underlying SCM, resample \(n\) samples from the resulting SCM and reinitialize our algorithm with new initial weights for \(\hat{\*L}\). The ground truth matrix \(\*L\) is sampled once and kept fixed across random seeds. We show that our results do not rely on a cherry picked mixing matrix by reporting additional runs over random seeds for randomly resampled \(\*L\) (cf. \cref{app:ssec:add_results}).

\subsection{Implementation and Training Details}
We use \texttt{pytorch} \citep{paszke_pytorch_2019} to implement our algorithm in practice. For optimization, we use the \texttt{adamW} algorithm \citep{loshchilov_decoupled_2018} with learning rate \(lr = 2 * 10^{-3}\) and otherwise default parameters provided by \texttt{pytorch}. For all experiments, we use a batch size of 4096 and train for 50 epochs.

We refrain from performing hyperparameter optimization in this work. All hyperparameters are chosen to roughly result in equal magnitudes of each respective loss term and shared across all experiments. The values of the hyperparameters are chosen as
\begin{align*}
    \lambda_e & = 1,\\
    \lambda_m & = 1,\\
    \lambda_{\textnormal{diag}} & = 10,\\
    \lambda_{\textnormal{norm}} & = 5.
\end{align*}
For the implementation of ICA, we use the FastICA algorithm \citep{hyvarinen_independent_2000}.

\subsection{Nonlinear SCMs}\label{app:ssec:nonlin}
We sample the adjacency matrix of the graph of both nonlinear SCMs according to the procedure detailed in \cref{ssec:synth_data} with \(p = 0.75\). The resulting graph that is shared by both models is shown in \cref{fig:graph_nonlin}.

The structural equations of SCM 1 are:
\begin{align*}
    Z_1 &:= \eta_1,\\
    Z_2 &:= Z^2_1 + \eta_2,\\
    Z_3 &:= Z^2_1 + Z^2_2 + \eta_3,\\
    Z_4 &:= Z^2_1 + Z^2_2 + Z^2_3 + \eta_4,\\
    Z_5 &:= Z^2_1 + Z^2_3 + Z^2_4 + \eta_5,\\
    Z_6 &:= Z^2_2 + Z^2_3 + Z^2_4 + Z^2_5 + \eta_6,\\
\end{align*}
and those of SCM 2 are defined as:
\begin{align*}
    Z_1 &:= \eta_1,\\
    Z_2 &:= \sin(Z_1) + \eta_2,\\
    Z_3 &:= \sqrt{Z_1 + Z_2} + \eta_3,\\
    Z_4 &:= \log(Z^2_1 + Z_2) + Z^2_3 + \eta_4,\\
    Z_5 &:= Z_3 \cos(Z_1) + \arctan(Z_4) + \eta_5,\\
    Z_6 &:= Z_2 Z_3  e^{\frac{Z^2_4}{Z_5}} + \eta_6.\\
\end{align*}
\begin{figure}[t]
    \centering
    \resizebox{.3\textwidth}{!}{
    \tikz[latent/.append style={minimum size=0.85cm},obs/.append style={minimum size=0.85cm},det/.append style={minimum size=1.15cm}, wrap/.append style={inner sep=2pt}, plate caption/.append style={below left=5pt and 0pt of #1.south east}, on grid]{
        \node[latent](z_1){\(Z_1\)};
        \node[latent, left=2*0.866cm of z_1, yshift=2*-0.5cm](z_2){\(Z_2\)};
        \node[latent, right=2*0.866cm of z_1, yshift=2*-0.5cm](z_3){\(Z_3\)};
        \node[latent, below=2cm of z_2](z_4){\(Z_4\)};
        \node[latent, below=2cm of z_3](z_5){\(Z_5\)};
        \node[latent, below=4cm of z_1](z_6){\(Z_6\)};
        \edge[-{Stealth[length=2mm, width=1.5mm]}]{z_1}{z_2, z_3, z_4, z_5};
        \edge[-{Stealth[length=2mm, width=1.5mm]}]{z_2}{z_3, z_4, z_6};
        \edge[-{Stealth[length=2mm, width=1.5mm]}]{z_3}{z_4, z_5, z_6};
        \edge[-{Stealth[length=2mm, width=1.5mm]}]{z_4}{z_5, z_6};
        \edge[-{Stealth[length=2mm, width=1.5mm]}]{z_5}{z_6}; 
    }
    }
    \caption{Causal graph of SCM 1 and SCM 2.}
    \label{fig:graph_nonlin}
\end{figure}

\subsection{Metric}\label{app:ssec:metric}
Here, we present the formal definition of our used metric, the mean correlation coeffiecient (MCC) \citep{hyvarinen_unsupervised_2016, khemakhem_variational_2020}.

Let \(\*C\) be the Pearson correlation matrix between the ground truth variables \(\*Z\) and the learned variables \(\hat{\*Z}\). Then, the MCC score is defined as
\begin{align*}
    \textnormal{MCC} := \max_{\pi \in \mathfrak{S}_d} \frac{1}{d} \sum^d_{j=1} \lvert \*C_{j, \pi(j)} \rvert,
\end{align*}
where \(\mathfrak{S}_d\) is the set of \(d\)-permutations and \(\lvert \cdot \rvert\) denotes the absolute value.

\subsection{Additional Experiments}\label{app:ssec:add_results}
Here, we present repetitions of the experiment in \cref{ssec:results} where we vary the number of ground truth variables \(d\) under additional random samples of the mixing matrix \(\*L\). The results are reported in \cref{fig:add_exps}.

\begin{figure}[b]
    \centering
    \includegraphics[width=\textwidth]{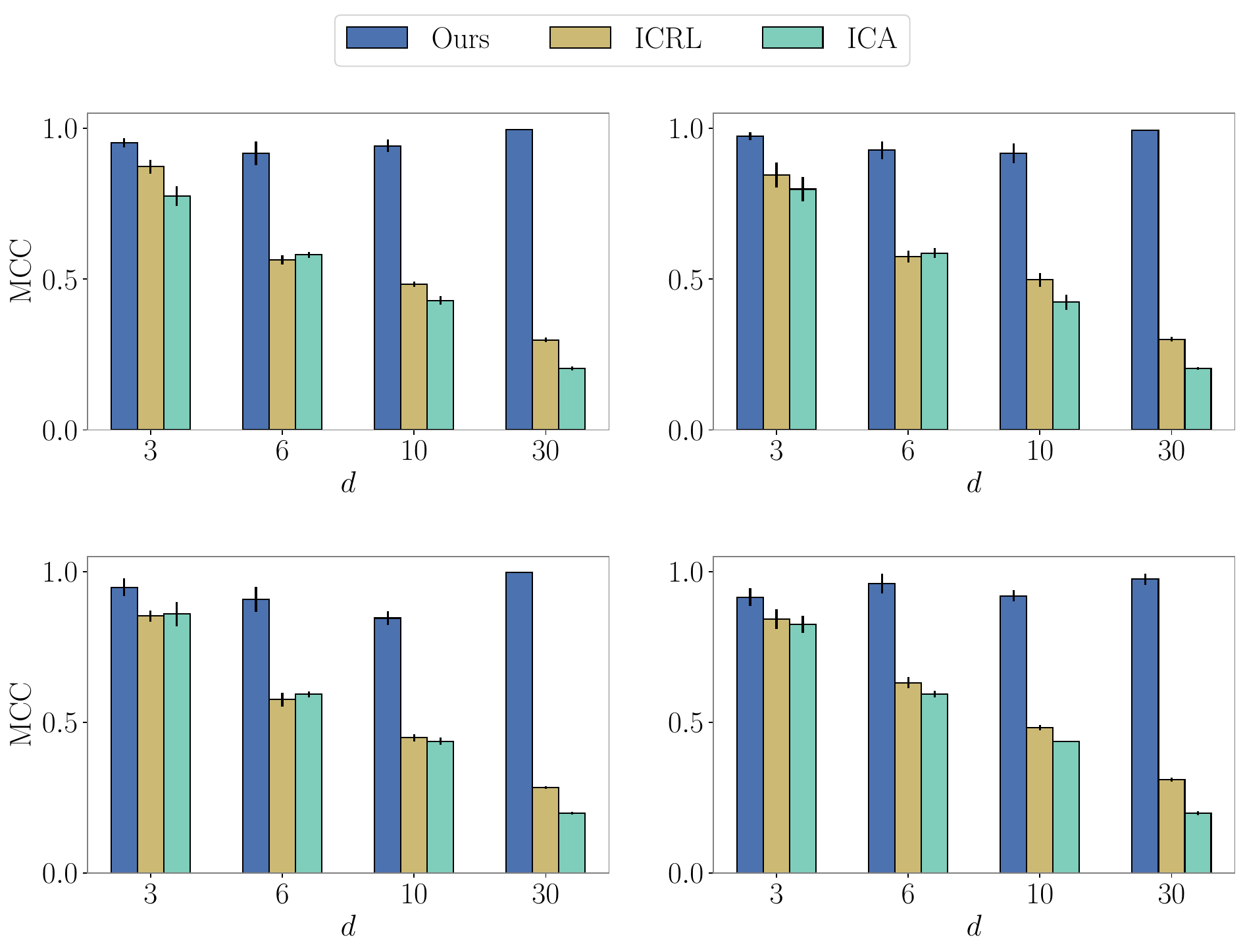}
    \caption{Experiments for changing number of variables \(d\) for additional mixing matrices \(\*L\). Each subfigure corresponds to a different \(\*L\). We report the mean MCC score for five random seeds with error bars indicating the standard error.}
    \label{fig:add_exps}
\end{figure}

\end{document}